\newtheorem{theorem}{Theorem}{\bfseries}{\rmfamily}
\newtheorem{definition}{Definition}{\bfseries}{\rmfamily}
\newtheorem{axiom}{Axiom}{\bfseries}{\rmfamily}
\newtheorem{example}{Example}{\itshape}{\rmfamily}
\renewcommand\thmcontinues[1]{Continued}
{\itshape\bfseries}{\rmfamily}
{\bfseries}{\rmfamily}
{\bfseries}{\rmfamily}
{\bfseries}{\rmfamily}
{\bfseries}{\rmfamily}
\newtheorem{proposition}[theorem]{Proposition}{\bfseries}{\rmfamily}
{\bfseries}{\rmfamily}
{\itshape}{\rmfamily}
{\bfseries}{\rmfamily}
{\bfseries}{\rmfamily}
{\itshape}{\rmfamily}
\def\sgn{\mathop{\rm sgn}}
\begin{document}
%
\title{Some Supplementaries to The Counting Semantics\\ for Abstract Argumentation}

\author{\IEEEauthorblockN{Fuan Pu, Jian Luo and Guiming Luo}
\IEEEauthorblockA{School of Software, Tsinghua University\\
Beijing China, 100084\\
pfa12@mails.tsinghua.edu.cn, j-luo10@mails.tsinghua.edu.cn, gluo@tsinghua.edu.cn}
}


%


\maketitle

\begin{abstract}
Dung's abstract argumentation framework consists of a set of interacting arguments
and a series of semantics for evaluating them. Those semantics partition the powerset of the set of arguments into two classes: extensions and non-extensions. In order to reason with a specific semantics, one needs to take a credulous or skeptical approach, i.e. an argument is eventually accepted, if it is accepted in one or all extensions, respectively. In our previous work \cite{ref-pu2015counting}, we have proposed a novel semantics, called \emph{counting semantics}, which allows for a more fine-grained assessment to arguments by counting the number of their respective attackers and defenders based on argument graph and argument game. In this paper, we continue our previous work by presenting some supplementaries about how to choose the damaging factor for the counting semantics, and what relationships with some existing approaches, such as Dung's classical semantics, generic gradual valuations. Lastly, an axiomatic perspective on the ranking semantics induced by our counting semantics are presented.
\end{abstract}

\begin{IEEEkeywords}
abstract argumentation; argument game; graded assessment; counting semantics; ranking-based semantics;
\end{IEEEkeywords}

%
\IEEEpeerreviewmaketitle

\section{Introduction}
Argumentation is closer to human reasoning than classical logic. It provides the means for comparing information by analysing pros and cons when trying to make a decision \cite{ref-David00argknowledge}. Argumentation theory has gained interest in artificial intelligence since it provides the basis for computational models inspired by the way humans reason. These models have extended classical reasoning approaches, based on deductive logic, that were proving increasingly inadequate for problems requiring non-monotonic reasoning and explanatory reasoning not available in standard nonmonotonic logics.

The most popularly used framework to talk about general issues of argumentation is that of Dung's abstract argumentation \cite{ref-Dung1995AAF}, in which arguments are represented as atomic entities and the interactions between different arguments are modeled by an attack relation. It provides a series of extension-based semantics for solving the inconsistent knowledges by selecting acceptable subsets. Generally, for a specific extension-based semantics, there is usually a set of extensions that is consistent with the semantical context. In order to reason with a semantics one has to take either a credulous or skeptical perspective. That is, an argument is ultimately accepted with respect to a semantics if the argument is contained in at least one extension consistent with that semantics (the credulous perspective) or if the argument is included in all extensions consistent with the semantics (the skeptical perspective). This extreme points of views may cause undesired results, since in extreme cases the set of credulously accepted arguments may contain nearly the whole set of arguments and the set of skeptically accepted set of arguments may be nearly empty.

In order to get a more fine-grained assessment on arguments, we had proposed a new semantics that generalized the classical extension-based semantics in \cite{ref-pu2015counting}. Our proposal is based on counting the number of attackers and defenders for each argument, hence called \emph{counting semantics}. An argument is more acceptable if there are more defenders for it and less attackers against it. In this paper, we will continue our previous work to discuss the determination of the damping factor for the counting semantics and relate our proposal with some existing approaches.

The rest of this paper is structured as follows. In Section \ref{Sec_Background}, we give a brief overview on abstract argumentation. Section~\ref{Sec_CountingSemantics} provides the basic concept of the counting semantics. Section~\ref{Sec_DampingFactor} discusses the selection of the damping factor. We relate our proposal with classical (extension-based) semantics in Section~\ref{Sec_Comparation_Classical} and with the generic gradual valuation in Section~\ref{Sec_Comparation_Gradual}. In Section~\ref{Sec_AxiomOnArgRank}, we present an axiomatic perspective on the ranking semantics induced by our counting semantics. We conclude in Section~\ref{Sec_Conclusion}.

\section{Abstraction Argumentation Framework} \label{Sec_Background}
In this section, we briefly outline the key elements of abstract argumentation frameworks. Now let us begin Dung's abstract characterization of an argumentation system \cite{ref-Dung1995AAF}:
\begin{definition}[Abstract Argumentation Framework] \label{Def_Argumentation_Framework}
An \emph{argumentation framework} is a pair $\textit{AF}=\left< \mathcal{X}, \mathcal{R}\right>$ where $\mathcal{X}$ is a finite set of arguments and $\mathcal{R} \subseteq \mathcal{X} \times \mathcal{X}$ is a binary relation on $\mathcal{X}$, also called \emph{attack relation}. $(a,b)\in \mathcal{R}$ means that $a$ attacks $b$, or $a$ is an attacker of $b$. Often, we write $(a,b) \in \mathcal{R}$ as $a \mathcal{R} b$.
\end{definition}

We denote by $\mathcal{R}^-(x)$ (respectively, $\mathcal{R}^+(x)$) the subset of $\mathcal{X}$ containing those arguments that attack (respectively, are attacked by) the argument $x\in\mathcal{X}$, extending this notation in the natural way to sets of arguments, so that for $S\subseteq \mathcal{X}$, $\mathcal{R}^-(S) \triangleq \{x\in\mathcal{X}: \exists y \in S \mbox{~such that~} x\mathcal{R}y\}$ and $\mathcal{R}^+(S) \triangleq \{x\in\mathcal{X}: \exists y \in S \mbox{~such that~} y\mathcal{R}x\}$. Now, let us characterise two fundamental notions of conflict-free and defence.
\begin{definition}[Conflict-free, Defense] \label{Def_CF&Defense}
Let $\textit{AF}=\left< \mathcal{X}, \mathcal{R}\right>$ be an argumentation framework, let $S\subseteq\mathcal{X}$ and $x\in\mathcal{X}$.
\begin{itemize}
  \item $S$ is \emph{conflict-free} iff $S \cap \mathcal{R}^-(S)=\emptyset$.
  \item $S$ \emph{defends} argument $x$ iff $\mathcal{R}^-(x)\subseteq\mathcal{R}^+(S)$. It is also said that argument $x$ is \emph{acceptable} with respect to $S$.
\end{itemize}
\end{definition}
Obviously, a set of arguments is conflict-free iff no argument in that set attacks another. A set of arguments defends a given argument iff it attacks all its attackers.

\begin{definition}[Characteristic Function] \label{Def_CharacteristicFun}
The \emph{characteristic function} of an argumentation framework $\left< \mathcal{X}, \mathcal{R}\right>$ is a function $\mathfrak{F}: 2^{\mathcal{X}}\mapsto 2^{\mathcal{X}}$ such that, given $S\subseteq\mathcal{X}$, $\mathfrak{F}(S)=\{x\in\mathcal{X}| S \hbox{~defends~} x\}$.
\end{definition}
Stated otherwise, $\mathfrak{F}(S)$ is the set of all arguments that $S$ defends. To define the solutions of an argumentation framework, we mean choosing a set of arguments that satisfies some acceptable criteria. Several of these properties, called extensions or semantics, have been proposed by Dung.
\begin{definition}[Acceptability Semantics] \label{Def_Acceptability}
Let $S\subseteq\mathcal{X}$ be a conflict-free set of arguments in argument system $\left< \mathcal{X}, \mathcal{R}\right>$.
\begin{itemize}
  \item $S$ is an \emph{admissible extension} iff $S\subseteq\mathfrak{F}(S)$.
  \item $S$ is a \emph{complete extension} iff $S=\mathfrak{F}(S)$.
  \item $S$ is a \emph{grounded extension} iff $S=\mathfrak{F}(S)$ and $S$ is minimal (w.r.t. $\subseteq$). It is the least fixed point of $\mathfrak{F}$, and its existence and uniqueness have been proved in \cite{ref-Dung1995AAF} and \cite{ref-caminada2006issue}.
  \item $S$ is a \emph{preferred extension} iff $S=\mathfrak{F}(S)$ and $S$ is maximal (w.r.t. $\subseteq$).
  \item $S$ is a \emph{stable extension} iff $S=\mathcal{X}\backslash \mathcal{R}^+(S)$ or $S=\overline{\mathcal{R}^+(S)}$, where the bar on $\mathcal{R}^+(S)$ denotes the relative complement of $\mathcal{R}^+(S)$ with respect to $\mathcal{X}$.
\end{itemize}
\end{definition}

\begin{example} \label{Exp_SimpleAF}
  Consider the argumentation framework $\textit{AF}=\left< \mathcal{X}, \mathcal{R}\right>$, depicted in Fig.~\ref{Fig_AAF}, in which $\mathcal{X}=\{x_1, x_2, x_3, x_4\}$ and $\mathcal{R}=\{(x_2,x_1),(x_3,x_2),(x_2,x_3),(x_3,x_3), (x_4,x_2)\}$. For this example, $\textit{AF}$ has two admissible sets: $\{x_4\}$ and $\{x_1, x_4\}$. $\{x_1,x_4\}$ is the only preferred extension of $\textit{AF}$, and it is also complete and grounded. $\textit{AF}$ has no stable extension.
\end{example}

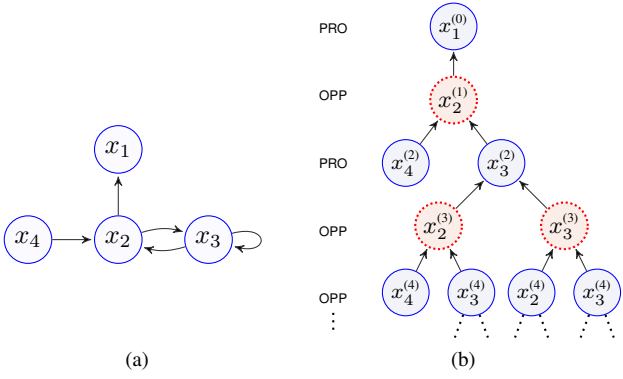
\begin{figure}[!tb]
\centering
\subfloat[]{
\begin{tikzpicture}[->,>=stealth',shorten >=1pt,auto,node distance=1.2cm, thin]
\tikzstyle{vNorm}=[draw=blue,fill=blue!2,circle,text width=5mm,inner sep=1pt,minimum height=6pt, align=center]
\tikzstyle{vBlank}=[text width=5mm, font={\tiny\sf}, align=center] 
\tikzstyle{every edge}=[draw=black!80, font=\scriptsize]

\node[vNorm](x1){$x_1$};
\node[vNorm, below of=x1](x2){$x_2$};
\node[vNorm, right of=x2](x3){$x_3$};
\node[vNorm, left of=x2](x4){$x_4$};

\path (x2) edge               (x1)
      (x2) edge [bend left=17](x3)
      (x3) edge [bend left=17](x2)
      (x3) edge [loop right]  (x3)
      (x4) edge               (x2);

\node[vBlank, below of=x2](B1){};

\end{tikzpicture}
\label{Fig_AAF}}
\hfil
\subfloat[]{
\begin{tikzpicture}[auto,node distance=0.9cm]
\definecolor{Attacker}{rgb}{0.85,0.33,0.1}
\definecolor{Defender}{rgb}{0.08,0.17,0.55}
\tikzstyle{vBlue}=[draw=blue,fill=Defender!5, thin, circle,inner sep=0.2pt, minimum height=6pt, text width=5mm, font=\footnotesize, align=center]
\tikzstyle{vRed}=[draw=red,densely dotted,thick, fill=Attacker!10,circle,inner sep=0.2pt, minimum height=6pt, text width=5mm, font=\small, align=center]
\tikzstyle{vTxt}=[text width=5mm, font={\tiny\sf}, align=center]
\tikzstyle{every edge}=[draw=black!80, ->,>=stealth',shorten >=0.5pt, font=\footnotesize]

\node[vTxt](P1){PRO};
\node[vTxt, below of=P1](O1){OPP};
\node[vTxt, below of=O1](P2){PRO};
\node[vTxt, below of=P2](O2){OPP};
\node[vTxt, below of=O2](P3){OPP};
\draw[dotted, color=black, thick] (node cs:name=P3) -- +(0,-4.5mm);

\node[vBlue, right of=P1, xshift=7mm, yshift=0.2mm](x11){$x_{\scriptstyle 1}^{\mbox{\tiny (0)}}$};
\node[vRed, below of=x11, yshift=-0.8mm](x21){$x_{\scriptstyle 2}^{\mbox{\tiny (1)}}$};
\node[vBlue, below left of=x21, yshift=-2mm](x31){$x_{\scriptstyle 4}^{\mbox{\tiny (2)}}$};
\node[vBlue, below right of=x21, yshift=-2mm](x32){$x_{\scriptstyle 3}^{\mbox{\tiny (2)}}$};
\node[vRed, below left of=x32, yshift=-2mm, xshift=-2mm](x41){$x_{\scriptstyle 2}^{\mbox{\tiny (3)}}$};
\node[vRed, below right of=x32, yshift=-2mm, xshift=2mm](x42){$x_{\scriptstyle 3}^{\mbox{\tiny (3)}}$};
\node[vBlue, below left of=x41, yshift=-2.5mm, xshift=2mm](x51){$x_{\scriptstyle 4}^{\mbox{\tiny (4)}}$};
\node[vBlue, below right of=x41, yshift=-2.5mm, xshift=-2mm](x52){$x_{\scriptstyle 3}^{\mbox{\tiny (4)}}$};
\node[vBlue, below left of=x42, yshift=-2.5mm, xshift=2mm](x53){$x_{\scriptstyle 2}^{\mbox{\tiny (4)}}$};
\node[vBlue, below right of=x42, yshift=-2.5mm, xshift=-2mm](x54){$x_{\scriptstyle 3}^{\mbox{\tiny (4)}}$};
\path (x21) edge               (x11)
      (x31) edge               (x21)
      (x32) edge               (x21)
      (x41) edge               (x32)
      (x42) edge               (x32)
      (x51) edge               (x41)
      (x52) edge               (x41)
      (x53) edge               (x42)
      (x54) edge               (x42);

\draw[dotted, color=black, thick] (node cs:name=x52) -- +(-2.2mm,-6mm);
\draw[dotted, color=black, thick] (node cs:name=x52) -- +(2.2mm,-6mm);
\draw[dotted, color=black, thick] (node cs:name=x53) -- +(-2.2mm,-6mm);
\draw[dotted, color=black, thick] (node cs:name=x53) -- +(2.2mm,-6mm);
\draw[dotted, color=black, thick] (node cs:name=x54) -- +(-2.2mm,-6mm);
\draw[dotted, color=black, thick] (node cs:name=x54) -- +(2.2mm,-6mm);

\end{tikzpicture}
\label{Fig_DisputeTree}}
\caption{Argumentation framework and dispute tree. (a) shows an argumentation framework, (b) presents the dispute tree induced in $x_1$.}
\label{Fig_AF&DisputeTree}
\end{figure}

\section{Counting Semantics for Argumentation}  \label{Sec_CountingSemantics}
In classical abstract argumentation, arguments are either acceptable or unacceptable, given a specific semantics. In order to get a more fine-grained view on the status of arguments we had proposed a new semantics that generalized classical semantics \cite{ref-pu2015counting}, called counting semantics. The counting semantics assigns to each argument of an argumentation framework a numerical strength value which is meant to be interpreted as a degree of acceptability so as to finely compare and rank arguments from the most acceptable to the weakest one(s). In this section, we provide the basic concepts of the counting semantics and supplement some examples to present the calculation of the counting semantics.

The fundamental intuition used to formalise the degree of acceptability is essentially the same as those found in abstract argumentation theory: argument $x$ is more acceptable than argument $y$ iff $x$ has a better defence (for it) and a lower attack (against it). In order to assess the strength value of each argument in an argumentation framework, we consider their evaluation procedures as dialogue games between two fictitious agents --- \textsf{PRO} (for ``proponent'') and \textsf{OPP} (for ``opponent'') --- each of which are referred to as the other's ``counterpart'' \cite{Simari2009argame,ref-caminada2009argame}. A dialogue game begins with \textsf{PRO} putting forward an initial argument, and then \textsf{PRO} and \textsf{OPP} take turns in a sequence of moves called a \emph{dispute}, in which each agent makes an argument that attacks its counterpart's last move. In general, the counterpart can try a different line of attack and create a new dispute. This leads to a \emph{dispute tree} structure that represents the dialogue game. Nodes in a dispute tree are labelled by arguments and are assigned the status of \emph{defender} and \emph{attacker} of the root argument, depending upon the argument at that node is made by the proponent or the opponent, or depending upon whether the walk length from the current node to the root node is even or odd. For instance, consider two agents arguing the argumentation framework shown in Fig.~\ref{Fig_AAF}, and the dispute tree induced by argument $x_1$ is shown Fig.~\ref{Fig_DisputeTree}. Clearly, this dispute tree is infinite as both agents are able to repeat counterarguments due to the presence of cycles in the argument graph. In this dispute tree, the blue solid nodes, made by \textsf{PRO}, are defenders of $x_1$, whereas, the red dotted nodes, made by \textsf{OPP}, are attackers of $x_1$. Each node is also assigned a superscript, which denotes the length of the walk from the current node to the root node. We can see that if a node has a even-numbered superscript then it is a defender, otherwise it is an attacker. Note that the root node is also a defender of itself as each argument has a walk with length $0$ to itself.

We claim that an argument is more acceptable if \textsf{PRO} puts forward more number of defenders for it and \textsf{OPP} puts forward less number of attackers against it. One simple approach is thus to count all length of attackers and defenders for each argument. We positively count all defenders and negatively count all attackers. This is easy to understand since an argument is always weakened by its attackers and is ``reinstated'' by its defenders. Therefore, in any case, the greater the number computed, the more acceptable the argument. Let $\textit{AF}=\left< \mathcal{X}, \mathcal{R}\right>$ with $\mathcal{X}=\{x_1,x_2,\cdots,x_n\}$. We define the \emph{attack matrix} $\bm{A}$ for $\textit{AF}$ as a $n\times n$ matrix such that $a_{ij}=1$ if $x_j\mathcal{R}x_i$; otherwise, $0$.\footnote{In fact, the attack matrix of an argumentation framework is the transpose of the adjacency matrix of its corresponding attack graph.} The number of $\ell$-length attackers (when $\ell$ is odd) or defenders (when $\ell$ is even) for each argument in $\mathcal{X}$ are stored by employing a $n$-dimensional column vector $\bm{I}^{(\ell)}$, whose $i^{\textit{th}}$ component, denoted by $\bm{I}^{(\ell)}(x_i)$, is the number of $\ell$-length attackers or defenders of $x_i$. It has been proved by \cite{ref-west2001introduction} that $\bm{I}^{(\ell)}$ can be computed by the calculation of the $\ell^{\textit{th}}$ power of $\bm{A}$, i.e.,
\begin{equation}\label{Eqn_CountingLength}
  \bm{I}^{(\ell)} = \bm{A}^{\ell}\bm{e}, ~ \ell = 0,1,2,\cdots
\end{equation}
where $\bm{e}$ is the column vector consisting of all ones. Given the maximum walk  length $k$ (which will be used in order to capture finite attackers and defenders under $k$), then the simple counting model can be obtained by positively summing $\bm{I}^{(\ell)}$ for all even $\ell$ under $k$ and negatively summing $\bm{I}^{(\ell)}$ for all odd $\ell$ under $k$, i.e.,
\begin{equation} \label{Eqn_SimpleCounting}
  \bm{v}^{(k)}=\sum^{k}_{\ell=0}(-1)^\ell \bm{I}^{(\ell)} = \sum^{k}_{\ell=0}(-1)^\ell\bm{A}^\ell\bm{e}
\end{equation}
in which $\bm{v}^{(k)}$ is the vector of strength values assigning to arguments in $\mathcal{X}$, and the $i^{\textit{th}}$ component of $\bm{v}^{(k)}$ is the strength values of $x_i$, denoted by $\bm{v}^{(k)}(x_i)$. The greater $k$, the closer the evaluation obtained to the actual counting approach. As $k$ goes to $\infty$, $\bm{v}^{(k)}$ is the evaluation on arguments.

\begin{example} \label{Exp_SimpleCounting}
The attack matrix of the argumentation framework in Fig.~\ref{Fig_AAF} is
\begin{equation*}
  \bm{A}=
  \left[
    \begin{array}{cccc}
      0 & 1 & 0 & 0 \\
       0  &  0  &  1  &  1  \\
       0  &  1  &  1  &  0  \\
       0  &  0  &  0  &  0  \\
    \end{array}
  \right]
\end{equation*}
Then, we can obtain
\begin{eqnarray*}
  \bm{I}^{(0)} &=& \bm{A}^0\bm{e}= [1, 1, 1, 1]^T \\
  \bm{I}^{(1)} &=& \bm{A}^1\bm{e}= [1, 2, 2, 0]^T \\
  \bm{I}^{(2)} &=& \bm{A}^2\bm{e}= [2, 2, 4, 0]^T \\
  \cdots
\end{eqnarray*}
All entries of $\bm{I}^{(0)}$ are $1$s since each argument is a $0$-length defender of itself. $\bm{I}^{(1)}$ indicates that $x_1$ has one $1$-length attacker ($x_2 \rightarrow x_1$), $x_2$ has two ($x_3 \rightarrow x_2$ and $x_4 \rightarrow x_2$), $x_3$ has two ($x_2 \rightarrow x_3$ and $x_2 \rightarrow x_3$), and $x_4$ has zero. $\cdots$. Now, the calculations of the counting approach under $k$ ($k=0,1,2,\cdots$) are
\begin{eqnarray*}
  \bm{v}^{(0)} &=& \bm{I}^{(0)}= [1, 1, 1, 1]^T \\
  \bm{v}^{(1)} &=& \bm{I}^{(0)} - \bm{I}^{(1)}= [0, -1, -1, 1]^T \\
  \bm{v}^{(2)} &=& \bm{I}^{(0)} - \bm{I}^{(1)} + \bm{I}^{(2)}= [2, 1, 3, 1]^T \\
  \cdots
\end{eqnarray*}
\end{example}

However, this simple model leads to two problems. The first is that for an attack graph with cycles, when $k$ goes to $\infty$, then some arguments may have infinite number of attackers and defenders. This is not conducive to comparison and practical application since if the counting values of two arguments are both infinite, we can not compare them.

The second problem is that the simple counting model does not distinguish different lengths of attackers and defenders. Different lengths of attackers or defenders of an argument may have different impacts on the argument. The simple model just simply count them together and does not consider which is more important and which is less important. This problem may lead to some counter intuitions. Considering Example~\ref{Exp_SimpleCounting}, for instance, $\bm{v}^{(2)}(x_3)=3$ is greater than $\bm{v}^{(2)}(x_4)=1$. It is counterintuitive since $x_4$ is not attacked and should be most acceptable. We consider that shorter attackers and defenders are preferred, which can effectively drive the agents to make only relevant moves, and thus we assume that a shorter attacker (respectively, defender) of an argument has more effect than a longer one on the argument \cite{ref-rienstra2013opponent}.

To remedy these two problems, we define a \emph{normalization factor}, which can ensure that the argument strength scale is bounded, and a \emph{damping factor} on walk length, which allows a more refined treatment on different length of attackers and defenders. Now, we redefine the vector $\bm{I}^{(\ell)}$ as
\begin{equation}\label{Eqn_ImprovedCountingLength}
  \bm{I}^{(\ell)}_\alpha = \alpha^\ell\widetilde{\bm{A}}^\ell\bm{e}
\end{equation}
in which $\alpha\in(0,1)$ is the damping factor and $\widetilde{\bm{A}}$ is the normalized attack matrix defined as $\widetilde{\bm{A}}=\bm{A}/{N}$ where the scalar $N  = \|\bm{A}\|_\infty=\max_{1\leq i \leq n}\sum_{j=1}^{n}|a_{ij}|$ is the normalization factor.\footnote{The normalization factor $N$ we used is the infinity norm of the attack matrix (see \cite{ref-horn2012matrix}). It is dynamic and represents the ``size'' of the argumentation framework.} It can be seen that the damping factor $\alpha$ provides a graded treatment of attackers and defenders of various lengths since the longer the walk length $\ell$, the smaller the $\alpha^\ell$. Then, the improved counting model is defined as
\begin{equation} \label{Eqn_ImprovedCounting}
  \bm{v}^{(k)}_\alpha= \sum^{k}_{\ell=0}(-1)^\ell\bm{I}^{(\ell)}_\alpha = \sum^{k}_{\ell=0}(-1)^\ell\alpha^\ell\widetilde{\bm{A}}^\ell\bm{e}
\end{equation}
We have shown that the improved counting model can range the strength value of each argument into the interval $[0,1]$ (see \cite[Thm~1]{ref-pu2015counting}) and converge to a unique solution in $[0,1]$ as $k$ goes $\infty$ (see \cite[Thm~2]{ref-pu2015counting}). Then, we can define the counting semantics for an argumentation framework as the limit of $\{\bm{v}^{(k)}_\alpha\}^\infty_{k=0}$.
\begin{definition}
\label{Def_ArgStrength}
Let $\textit{AF}=\left< \mathcal{X}, \mathcal{R}\right>$ be an argumentation framework with $\mathcal{X}=\{x_1,x_2,\cdots,x_n\}$, and the damping factor $\alpha\in(0,1)$. The \emph{attacker and defender counting semantics} for such $\textit{AF}$ is, for all arguments $\mathcal{X}$,
\begin{equation*}
  \bm{v}_\alpha = \lim_{k\rightarrow\infty} \bm{v}^{(k)}_\alpha
\end{equation*}
The strength value of $x_i$ is the $i^{\textit{th}}$ component of $\bm{v}_\alpha$, denoted by $\bm{v}_\alpha(x_i)$.
\end{definition}
The counting semantics can be approximated iteratively by
\begin{equation} \label{Eqn_Iteration}
  \bm{v}^{(k)}_\alpha = \bm{e} - \alpha\widetilde{\bm{A}}\bm{v}^{(k-1)}_\alpha
\end{equation}
with the initial valuation $\bm{v}^{(0)}_\alpha=\bm{e}$ under a given tolerance $\epsilon$ (i.e., the iteration terminates when the change $\|\bm{v}^{(k)}_\alpha-\bm{v}^{(k-1)}_\alpha\| \leq \epsilon$).

\begin{example} \label{Exp_ImprovedCounting}
We continue Example~\ref{Exp_SimpleCounting}. The normalization factor of $\bm{A}$ is $N=2$, thus the normalized attack matrix is
\begin{equation*}
  \widetilde{\bm{A}}=
  \left[
    \begin{array}{cccc}
      0 & 1 / 2 & 0 & 0 \\
       0  &  0  &  1 / 2  &  1 / 2  \\
       0  &  1 / 2  &  1 / 2  &  0  \\
       0  &  0  &  0  &  0  \\
    \end{array}
  \right]
\end{equation*}
Assume $\alpha = 0.98$, then the counting values of each argument are summarized below:
\begin{align*}
  \bm{v}^{(0)}_\alpha &= \bm{e} = [1.00, 1.00, 1.00, 1.00]^T \\
  \bm{v}^{(1)}_\alpha &= \bm{e} - \alpha\widetilde{\bm{A}}\bm{v}^{(0)}_\alpha = [0.51, 0.02, 0.02, 1.00]^T \\
  \bm{v}^{(2)}_\alpha &= \bm{e} - \alpha\widetilde{\bm{A}}\bm{v}^{(1)}_\alpha = [0.99, 0.50, 0.98, 1.00]^T \\
  \cdots
\end{align*}
If we set $\epsilon=10^{-3}$, after finitely many iterations, the counting values gradually tends to be stable and converges to the approximative counting semantics $\bm{v}_\alpha = [0.89,0.22,0.60,1.00]^T$.
\end{example}

In the following sections, we will continue our previous work in \cite{ref-pu2015counting} and discuss how to determine the damping factor $\alpha$. Moreover, we relate our proposal with some existing approaches.

\section{The determination of the damping factor} \label{Sec_DampingFactor}
The damping factor plays an important role in the counting semantics. It not only provides a more refined treatment on different length of attackers and defenders but also controls the convergence speed of the computation. How to choose the damping factor $\alpha$ in practical application thus is an important question for the counting semantics.

Before this, let us see how the damping factor impact the strength values of arguments. It can be seen from Eqn.~\eqref{Eqn_ImprovedCountingLength} that the impact of the $\ell$-length attackers or defenders on arguments satisfies
\begin{equation*}
  \bm{I}^{(\ell)}_\alpha \leq \alpha^\ell\bm{e}
\end{equation*}
Obviously, the lower the $\alpha$, the shorter the $\ell$ causing the $\bm{I}^{(\ell)}_\alpha$ approaching to zero. In other words, the lower the $\alpha$, the less number of attackers and defenders are considered to contribute to the strength values of arguments since the attacker or defenders whose length greater than some length $\ell^\ast$ may have a little impacts on the strength values and can be ignored. However, we expect a ``ideal'' counting semantics, i.e., considering attackers and defenders as much as possible, hence we are inclined to chose $\alpha$ as close to $1$ as possible.

Another reason to choose a great $\alpha$ is that the lower bound of the counting semantics is $1-\alpha$ (i.e., $\bm{v}_\alpha(x_i)\geq 1-\alpha$ for any $x_i\in\mathcal{X}$). When $\alpha$ is close to $0$, the lower bound is close to $1$, resulting in a trivial uniform assessment, which is away from the goal of the comparisons among arguments.

However, as $\alpha$ approaches to $1$, more time may be needed to make  Eqn.~\eqref{Eqn_Iteration} convergence. We thus needs a tradeoff between the performance and the ``ideal'' semantics. By Eqn~\eqref{Eqn_Iteration}, it can be proved that the change $\delta^{(k)}$, at the $k^{\textit{th}}$ iteration, satisfies
\begin{equation}\label{Eqn_MeanChange}
  \delta^{(k)} = \|\bm{v}_\alpha^{(k)}-\bm{v}_\alpha^{(k-1)}\| \leq [\alpha\rho(\bm{\widetilde{A}})]^k
\end{equation}
where $\rho(\bm{\widetilde{A}})$ is the spectral radius of $\bm{\widetilde{A}}$ and $\rho(\bm{\widetilde{A}}) \leq \|\bm{\widetilde{A}}\|_\infty = 1$. Accordingly, we can conclude that the convergence rate of the computation of the counting semantics is the rate at which $[\alpha\rho(\bm{\widetilde{A}})]^k \rightarrow 0$. If we expect the counting semantics to converge to a tolerance level $\epsilon$ with at most $k_{\max}$ iterations, then we have the following rough estimate:
\begin{equation}\label{Eqn_RoughEstimate}
  k_{\max}=\frac{\log_{10} \epsilon}{\log_{10} [ \alpha\rho(\bm{\widetilde{A}})]}
\end{equation}
Assume $\rho(\bm{\widetilde{A}})= 1$, for $\epsilon=10^{-5}$ and $\alpha=0.97$, one can expect at most $\frac{-5}{\log_{10} 0.97}\thickapprox 378$ iterations until convergence to the counting semantics. For $\alpha = 0.98$, about $570$ iterations and for $\alpha=0.99$, about $1146$ iterations, as shown in Figure~\ref{Fig_DampingFactor}. If we expect the counting semantics to converge using no more than $500$ iterations, then $\alpha$ should be less than $0.98$. In most applications, however, $\rho(\bm{\widetilde{A}})$ is always far less than one (since the attack matrix $\bm{A}$ is often sparse), implying that $\alpha$ can be much larger. Therefore, we usually choose $\alpha$ in $[0.90,0.98]$.

\begin{figure}[htb]
\centering
  \includegraphics[width=.44\textwidth]{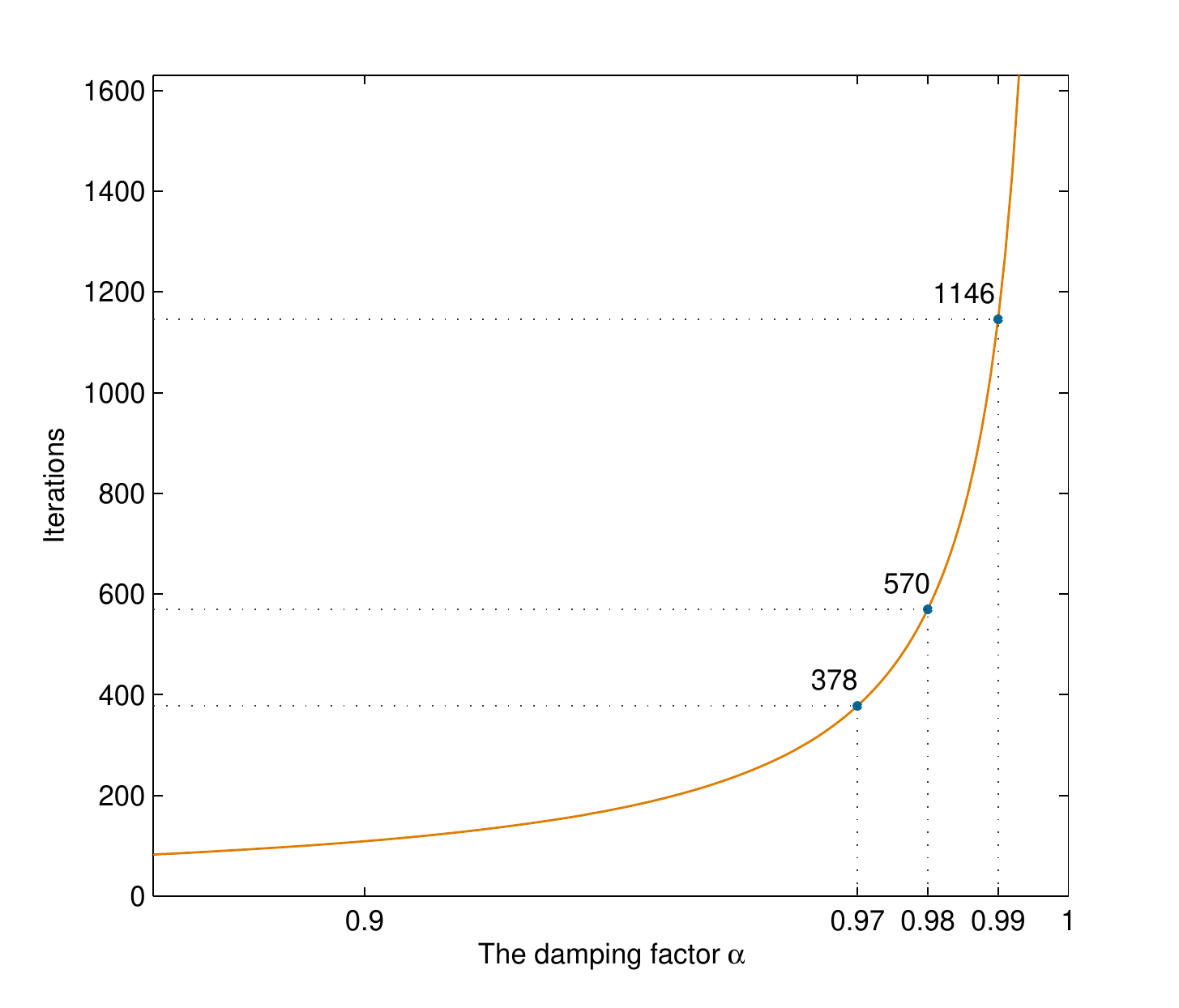}
\caption{The relation between iterations and the damping factor.}
\label{Fig_DampingFactor}
\end{figure}

\section{Counting Semantics vs. Dung's Semantics} \label{Sec_Comparation_Classical}
In this section, we investigate the relationships between Dung's classical semantics and the counting semantics in more depth. All of Dung's classical semantics are mainly grounded on set theory, while our counting semantics is based on numerical matrix operations. It seems that these two kinds of semantics have nothing to do with each other. In this paper, however, we will build their relationships by boolean algebra.

\subsection{Solve Classical semantics by Boolean Algebra}
To begin with, let us introduce two operations on boolean matrices.
\begin{definition}[Boolean Operations]
Let $\bm{A}_{m\times k}$ and $\bm{B}_{k\times n}$ be two boolean matrices (i.e., all entries are either logic $1$ or logic $0$).
\begin{itemize}
  \item The \emph{boolean product} of $\bm{A}$ and $\bm{B}$, denoted by $\bm{A}\odot\bm{B}$, is $\bm{C}_{m\times n}$ defined by
\begin{equation} \label{Eqn_BoolProduct}
  c_{ij}=\bigvee_{h=1}^{k}(a_{ih}\wedge b_{hj})
\end{equation}
  \item The \emph{negation} of $\bm{A}$, denoted by $\neg \bm{A}$, is a cellwise operator and defined by
\begin{equation} \label{Eqn_BoolNegation}
  [\neg \bm{A}]_{ij}=\neg a_{ij}
\end{equation}
\end{itemize}
\end{definition}

Moreover, let us introduce the boolean vector representation of a subset of a set of arguments and the boolean matrix representation of the attack relations between the arguments. Assume $\mathcal{X}=\{x_1,x_2,\cdots,x_n\}$ is a set of arguments, then $S\subseteq \mathcal{X}$ can be encoded by $n \times 1$ boolean column vector $\mathbf{g}_S$, whose $i^{\textit{th}}$ component $[\mathbf{g}_S]_i$ is $1$ if $x_i\in S$; otherwise $0$. Clearly, if $S=\emptyset$ then all components of $\mathbf{g}_S$ are $0$s, i.e., $\mathbf{g}_S=\mathbf{0}$, and if $S=\mathcal{X}$ then $\mathbf{g}_S=\bm{e}$. Intuitively, $\mathbf{g}_{\overline{S}}=\neg \mathbf{g}_S$ where $\overline{S}$ is the complement of $S$ with respect to $\mathcal{X}$, i.e., $\overline{S} =\mathcal{X}\backslash S$. We can utilize the attack matrix $\bm{A}$ to represent the attack relations between arguments when all the entries $\bm{A}$ are considered to be logic $0$ or $1$.

In the following, we will present the boolean algebra approaches to solve Dung's set-theory-based semantics.

\begin{theorem} \label{Prop_R+R-}
  Let $\textit{AF}=\left< \mathcal{X}, \mathcal{R}\right>$ with $\mathcal{X}=\{x_1,x_2,\cdots,x_n\}$, and its attack matrix be $\bm{A}$. Assume $S \subseteq \mathcal{X}$, then {\rm (i).} $\mathbf{g}_{\mathcal{R}^+(S)}=\bm{A}\odot\mathbf{g}_S$; {\rm (ii).} $\mathbf{g}_{\mathcal{R}^-(S)}=\bm{A}^T\odot\mathbf{g}_S$.
\end{theorem}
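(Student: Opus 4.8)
The plan is to prove both identities componentwise: since two boolean vectors are equal iff they agree in every coordinate, it suffices to show, for each $i$, that the $i^{\textit{th}}$ entry of the right-hand side equals the $i^{\textit{th}}$ entry of $\mathbf{g}_{\mathcal{R}^+(S)}$ (resp.\ $\mathbf{g}_{\mathcal{R}^-(S)}$). As every quantity here is boolean, this in turn reduces to checking that the two $i^{\textit{th}}$ entries equal logic $1$ under exactly the same condition.

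For part (i), I would expand the boolean product using Eqn.~\eqref{Eqn_BoolProduct}, giving $[\bm{A}\odot\mathbf{g}_S]_i = \bigvee_{h=1}^{n}\bigl(a_{ih}\wedge [\mathbf{g}_S]_h\bigr)$. Unfolding the conventions, $a_{ih}=1$ means $x_h\mathcal{R}x_i$ and $[\mathbf{g}_S]_h=1$ means $x_h\in S$, so the disjunct indexed by $h$ is $1$ precisely when $x_h\in S$ and $x_h$ attacks $x_i$. Hence the whole disjunction equals $1$ iff there is some $x_h\in S$ with $x_h\mathcal{R}x_i$, which is exactly the membership condition $x_i\in\mathcal{R}^+(S)$. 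This matches $[\mathbf{g}_{\mathcal{R}^+(S)}]_i=1$, establishing (i).

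For part (ii), the only change is that $\bm{A}^T$ replaces $\bm{A}$, so $[\bm{A}^T]_{ih}=a_{hi}$ and the $i^{\textit{th}}$ entry becomes $\bigvee_{h=1}^{n}\bigl(a_{hi}\wedge [\mathbf{g}_S]_h\bigr)$. Now $a_{hi}=1$ means $x_i\mathcal{R}x_h$, so a disjunct is $1$ exactly when $x_h\in S$ and $x_i$ attacks $x_h$; the disjunction is therefore $1$ iff there exists $x_h\in S$ with $x_i\mathcal{R}x_h$, i.e.\ $x_i\in\mathcal{R}^-(S)$, giving (ii).

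There is essentially no deep step here; the argument is a direct unfolding of Eqn.~\eqref{Eqn_BoolProduct} against the attacker/attacked conventions. The one place to be careful is the index bookkeeping forced by the convention $a_{ij}=1\Leftrightarrow x_j\mathcal{R}x_i$ (recall the attack matrix is the transpose of the adjacency matrix): in (i) the running index $h$ ranges over the \emph{attackers} of $x_i$, whereas transposing in (ii) flips the orientation so that $h$ ranges over the arguments \emph{attacked by} $x_i$. Keeping that distinction straight is precisely what makes $\bm{A}$ produce $\mathcal{R}^+$ and $\bm{A}^T$ produce $\mathcal{R}^-$.
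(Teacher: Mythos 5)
Your proof is correct and takes essentially the same approach as the paper's: a componentwise unfolding of the boolean product in Eqn.~\eqref{Eqn_BoolProduct}, showing that $[\bm{A}\odot\mathbf{g}_S]_i=1$ exactly when $x_i\in\mathcal{R}^+(S)$ under the convention $a_{ij}=1\Leftrightarrow x_j\mathcal{R}x_i$. The only cosmetic differences are that the paper splits your single iff-chain into two explicit cases ($x_i\in\mathcal{R}^+(S)$ and $x_i\notin\mathcal{R}^+(S)$) and dismisses part (ii) as ``similar,'' whereas you carry out the transposed index bookkeeping for (ii) explicitly.
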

\begin{proof}
  (i). Two cases need to be consider: \\
  \textbf{Case 1:} For any $x_i\in \mathcal{R}^+(S)$, i.e. $[\mathbf{g}_{\mathcal{R}^+(S)}]_i=1$, there must exist an argument $x_j\in S$ such that $x_j\mathcal{R}x_i$. This follows that $[\mathbf{g}_S]_j=1$ and $a_{ij}=1$. Hence,
  \begin{align*}
    [\bm{A}\odot\mathbf{g}_S]_i &= \bigvee^n_{h=1}(a_{ih}\wedge [\mathbf{g}_S]_h) =a_{ij}\wedge[\mathbf{g}_S]_j=1=[\mathbf{g}_{\mathcal{R}^+(S)}]_i
  \end{align*}
  \textbf{Case 2:} For any $x_i \notin \mathcal{R}^+(S)$, namely $[\mathbf{g}_{\mathcal{R}^+(S)}]_i=0$, then $\forall x_j\in S$ such that $\nexists x_j\mathcal{R}x_i$, i.e. for all $[\mathbf{g}_S]_j=1$ such that $d_{ij}=0$. On the other hand, if $x_j \notin S$ then $[\mathbf{g}_S]_j=0$. Thus for all $j \in \{1,2,\cdots,n\}$, $a_{ij}\wedge[\mathbf{g}_S]_j=0$. It implies that
  \begin{equation*}
    [\bm{A}\odot\mathbf{g}_S]_i=\bigvee^n_{j=1}0=0=[\mathbf{g}_{\mathcal{R}^+(S)}]_i
  \end{equation*}
  To sum up the above proof, $[\bm{A}\odot\mathbf{g}_S]_i=[\mathbf{g}_{\mathcal{R}^+(S)}]_i$ always holds for any argument $x_i$, and this concludes the theorem.  \\
  (ii). Similar to the proof of (i).
\end{proof}

Clearly, this theorem provides a way to define the boolean representation of function $\mathcal{R}^+$ and $\mathcal{R}^-$, i.e. $\mathcal{R}^+(\mathbf{g})=\bm{A}\odot\mathbf{g}$ and $\mathcal{R}^-(\mathbf{g})=\bm{A}^T\odot\mathbf{g}$. It is mentioned in \cite{ref-AC98} that the characteristic function and the operator $\mathcal{R}^+$ has the relation as below:
\begin{equation}\label{Eqn_CharacteristicFunction}
  \mathfrak{F}(S) = \overline{\mathcal{R}^+(\overline{\mathcal{R}^+(S)})}
\end{equation}
We can then rewrite the boolean representation of the characteristic function $\mathfrak{F}$ as:
\begin{equation}\label{Eqn_Boolean_CF}
  \mathfrak{F}(\mathbf{g})=\neg \mathcal{R}^+\big(\neg \mathcal{R}^+(\mathbf{g})\big)=\neg \big(\bm{A} \odot \neg(\bm{A}\odot \mathbf{g})\big)
\end{equation}
Base on this, another iteration approach for computing grounded extension can be established from the initial value $\mathbf{g}^{(0)}=\mathbf{0}$ (i.e. the empty set),
\begin{equation}\label{Eqn_BOOL_Iteration}
  \mathbf{g}^{(k)} = \mathfrak{F}(\mathbf{g}^{(k-1)})= \neg \big(\bm{A} \odot \neg(\bm{A}\odot \mathbf{g}^{(k-1)})\big)
\end{equation}

\subsection{The Relationships Between The Counting Semantics and Dung's Semantics}
Now, we move our attention to the relationship between the counting semantics and Dung's classical semantics. In order to facilitate the comparison and analysis, we convert the boolean operations in Eqn.~\eqref{Eqn_BOOL_Iteration} into arithmetic operations by employing the following identities\footnote{In these identities, boolean $0$ and $1$ are interpreted as integers.}:
\begin{align}
  \neg \mathbf{g} &= \bm{e} - \mathbf{g} \label{Eqn_BOOL_NEG}  \\
  \bm{A} \odot \mathbf{g} &= \sgn(\bm{A}\cdot \mathbf{g}) \label{Eqn_BOOL_MatrixVector}
\end{align}
in which the $\sgn$ is the signum function, defined as $\sgn(x)=1$ if $x>0$ and $\sgn(x)=0$ when $x=0$. The $\sgn$ of a matrix means performing the $\sgn$ operation on each cell of this matrix.


Then Eqn.~\eqref{Eqn_BOOL_Iteration} can be represented as
\begin{align}
   & \mathbf{g}^{(k)} = \bm{e} - \bm{A} \odot \big(\bm{e}-\bm{A}\odot \mathbf{g}^{(k-1)}\big) \label{Eqn_ArithIter0} \\
  \Leftrightarrow~~ & \mathbf{g}^{(k)} = \bm{e} - \sgn\big\{\bm{A} \cdot \big[\bm{e}-\sgn(\bm{A}\cdot \mathbf{g}^{(k-1)})\big]\big\} \label{Eqn_ArithIter1} \\
  \Leftrightarrow~~ & \mathbf{g}^{(k)} = \bm{e} - \sgn\big\{\widehat{\bm
   {A}}\cdot \big[\bm{e}-\sgn(\widehat{\bm
   {A}}\cdot \mathbf{g}^{(k-1)})\big]\big\} \label{Eqn_ArithIter2}
\end{align}
where $\widehat{\bm{A}} = \alpha\widetilde{\bm{A}}$. On the other hand, we rewrite the iterative formula of the counting semantics, i.e. Eqn.~\eqref{Eqn_Iteration}, as
\begin{equation}\label{Eqn_GT_Iter}
  \bm{v}_\alpha^{(k)}=\bm{e} - \widehat{\bm{A}}\cdot(\bm{e}-\widehat{\bm{A}}\cdot\bm{v}_\alpha^{(k-2)})
\end{equation}

Now, let us define another function $\mathfrak{G}: 2^\mathcal{X} \mapsto 2^\mathcal{X}$ such that for $S\subseteq \mathcal{X}$, $\mathfrak{G}(S)=\{x \in \mathcal{X}|x ~\mbox{is not attacked by}~ S ~\mbox{w.r.t the relation}~ \mathcal{R}\}$, alternately,
\begin{equation*}
\mathfrak{G}(S) = \overline{\mathcal{R}^+(S)}
\end{equation*}
Obviously, $\mathfrak{F}(S) = \mathfrak{G}\circ\mathfrak{G}(S)$ and a conflict-free set $S\subseteq \mathcal{X}$ is a stable extension iff $S$ is a fixed point of $\mathfrak{G}$. With the similar idea above, we write the boolean representation of $\mathfrak{G}$ as
\begin{equation}\label{Eqn_BL_STABLE}
  \mathfrak{G}(\mathbf{g}) = \neg(\bm{A}\odot \mathbf{g}) = \bm{e}-\sgn(\widehat{\bm
   {A}}\cdot \mathbf{g})
\end{equation}

Until now, we can easily observe that the iteration formulas of Eqn.~\eqref{Eqn_ArithIter2} and Eqn.~\eqref{Eqn_GT_Iter} have similar iteration structures apart from the $\sgn$ operations as well as the Eqn.~\eqref{Eqn_Iteration} and Eqn.~\eqref{Eqn_BL_STABLE}. This establishes quite interesting relationships between our counting semantics and classical semantics: Firstly, both the counting semantics and classical semantics are interaction-based valuation since here the value of each argument merely relies on the graph structure of the argumentation framework. Secondly, by Definition~\ref{Def_Acceptability}, it can be seen that the complete, grounded and preferred extension are included in the set of the fixed points of the Eqn.~\eqref{Eqn_ArithIter2} and the stable extension is the fixed point of the Eqn.~\eqref{Eqn_BL_STABLE}, while the counting semantics is the fixed point of Eqn.~\eqref{Eqn_GT_Iter}. Lastly, both the grounded semantics and the counting semantics (given the damping factor $\alpha$) are unique.

However, the main difference is that our counting semantics is more general since it allows for a more finer level between two extremes views (accepted and rejected) on reasoning, while the classical semantics represents an extreme case since it just involves being accepted or rejected. As a result, these classical semantics may cause empty extension. This might be suitable for reasoning but not for practical applications in some scenarios. Considering an argument system whose grounded extension is empty, for example, if a decision must be made, then the grounded semantics is unavailable since all arguments are unacceptable in this case. Our proposal engages in comparing and ranking arguments from the most acceptable to the weakest one(s). It is possible that the most acceptable argument(s) might be the good choice for some problems.

Another difference is that some classical semantics such as complete and preferred semantics may provide multiple extensions. This may cause that (with a credulous perspective) the recommendation is to administer almost all decisions. However, our counting semantics can always return a unique solution (specifying the damping factor $\alpha$). If one expects to achieve the multiplicity property from the counting semantics, he or she can choose a different damping factor $\alpha$ since various $\alpha$ may give different rankings on arguments.

\section{Relating with Generic Gradual Valuations}  \label{Sec_Comparation_Gradual}
Generic Gradual Valuations are one of the two proposals for introducing graduality in the interaction-based valuation, namely a local approach and a global approach \cite{ref-cayrol2005graduality}. It is local because strength values are assigned to arguments based solely on their direct attackers (so defenders are also taken into account through the attackers). In global approach, however, the value of an argument represents the set of all the attack and all the defence branches of different lengths for this argument. Overall, generic gradual valuations are a very versatile and expressive framework for assigning labels to arguments, where the labels indicate a set of possible acceptability or truth values to each argument. In this section, we will give comparisons between our approach and these two approaches.

\subsection{Comparison with Global Valuation}
At the first sight, the global approach is similar to ours. It assigns a pair of tuples to each argument: the first tuple contains the lengths of the defence branches (even integers), the other one contains the lengths of the attack branches (odd integers). This approach is claimed to be ``global'' since it computes the value of the argument using the whole attack graph influencing the argument.

However, there are three main differences between our approach and tupled approach. First, we employ vectors to memorize the number of attackers and defenders of different lengths. Second, to compute the number of attackers and defenders, we utilize a matrix approach, i.e., computing powers of attack matrix. While in tupled approach, a propagation algorithm is proposed to compute the tupled values. Third, to determine which argument is more acceptable, in tupled approach, a double comparison (first quantitative, then qualitative) algorithm is presented based on classical lexicographic comparison. One drawback of the tupled approach is that incomparable tupled values may exist. While our proposal is a purely quantitative mathematical approach, which assigns a numerical value in $[0,1]$ to each argument and facilitates the comparison between arguments.

\subsection{Comparison with Local Valuations}
The local approach is a very generic formalisation capable of capturing the discrete labeling semantics of \cite{ref-jakobovits1999robust} and the continuous ``categoriser'' function of \cite{ref-besnard2001logic}. Assume a totally ordered set $W$ with a minimum element $V_{\min}$ and a subset $V$ of $W$ containing $V_{\min}$ and a top element $V_{\max}$. Let $\textit{AF}=\left< \mathcal{X}, \mathcal{R}\right>$ with $\mathcal{X}=\{x_1,\cdots,x_n\}$. A local valuation is a function $v: \mathcal{X}\mapsto V$ such that:
\begin{itemize}
  \item $v(a)\geq V_{\min}$, $\forall a\in\mathcal{X}$.
  \item $v(a)=V_{\max}$, if $\mathcal{R}^-(a)=\emptyset$, $\forall a \in \mathcal{X}$.
  \item $v(a)=g\left(h(v(a_1), \cdots, v(a_m))\right)$, if $\mathcal{R}^-(a)=\{a_1, \cdots, a_m\} \neq \emptyset$.
\end{itemize}
where function $g: W\mapsto V$ such that $g$ is non-increasing, $g(V_{\min})=V_{\max}$ and $g(V_{\max})<V_{\max}$, and function $h: V^\ast\mapsto W$ such that ($V^\ast$ denotes the set of all finite sequences of elements of $V$)
\begin{itemize}
  \item $h(t)=t$ and $h()=V_{\min}$.
  \item $h(t_1,\cdots,t_m,t_{m+1})\geq h(t_1,\cdots,t_m)$.
  \item $h(t_1,\cdots,t_i,\cdots,t_m)\geq h(t_1,\cdots,t'_i,\cdots,t_m)$ where $t_i\geq t'_i$.
  \item $h(t_{i1},\cdots,t_{im}) = h(t_{1},\cdots,t_{m})$ for any permutation $\{t_{i1},\cdots,t_{im}\}$ of $\{t_{1},\cdots,t_{m}\}$.
\end{itemize}

By Eqn.~\eqref{Eqn_Iteration}, our proposal can be defined as an instance of the generic valuation such that
\begin{itemize}
  \item $V=[0,1]$, $W=[0,\infty[$, $V_{\min}=0$ and $V_{\max}=1$.
  \item $g(t) = 1-\frac{\alpha}{N}t$
  \item $h(t_1,\cdots,t_m)=t_1+\cdots+t_m$
\end{itemize}
Actually, we define $h: V^{n}\mapsto W$ as $h(\bm{t})=\bm{A}_{i\ast}\bm{t}$ where $n$ is the number of arguments in $\textit{AF}$, $\bm{A}_{i\ast}$ is the $i^{\textit{th}}$ row vector of $\bm{A}$ and $\bm{t}=[t_1, \cdots, t_n]^T$ is a $n$-dimensional column vector. Clearly, the two definitions are equivalent.

The main difference between our proposal and categoriser function is that the latter defines $g(t)=\frac{1}{1+t}$. Clearly, it can be seen that both approaches formalise the same intuition that an argument with low-valued attackers obtains high values and an argument with high-valued attackers obtains low values. Furthermore, both approaches are continuous and evaluate the strength of arguments on a scale of numerical values from $0$ to $1$.

It has proved that there is a bijective correspondence between labeling-based and extension-based semantics for complete, grounded and preferred semantics \cite{ref-caminada2006issue}. Therefore, the comparisons between our proposal and labeling semantics can refer to Section~\ref{Sec_Comparation_Classical}.

\section{Ranking-based Axiomatic Perspective On The Counting Semantics} \label{Sec_AxiomOnArgRank}
The counting semantics assign arguments a vector of numerical strength values, which are relative and do not make sense when they are not compared with each other. Actually, in most applications, we merely concern the ranking (ordering) over arguments induced by the counting semantics. Given the damping factor $\alpha$, the \emph{ranking} $\succeq_\alpha$ on the set of arguments $\mathcal{X}$ derived from the counting semantics $\bm{v}_\alpha$ is defined by: for any $x,y\in\mathcal{X}$, $x\succeq_\alpha y$ iff $\bm{v}_\alpha(x)\geq\bm{v}_\alpha(y)$. Intuitively, $\succeq_\alpha$ is total (i.e., $\forall x,y\in\mathcal{X}$, $x\succeq_\alpha y$ or $y\succeq_\alpha x$) and transitive (i.e., $\forall x,y,z\in\mathcal{X}$, if $x\succeq_\alpha y$ and $y\succeq_\alpha z$, then $x\succeq_\alpha z$). Note that here $x\succeq_\alpha y$ means that argument $x$ is at least as acceptable as argument $y$ with respect to $\alpha$. Formally, we define $x \simeq_\alpha y$ if and only if $x\succeq_\alpha y$ and $y\succeq_\alpha x$, which means $x$ and $y$ are equally acceptable w.r.t. $\alpha$. Moreover, $x \succ_\alpha y$, meaning $x$ is strictly more acceptable than $y$ w.r.t. $\alpha$, if and only if $x \succeq_\alpha y$ but not $y \succeq_\alpha x$.

In \cite{ref-amgoud2013ranking}, the authors propose a set of axioms (postulates), each of which represents a criterion, and is an intuitive and desirable property that a ranking-based semantics may enjoy. Such an axiomatic approach empowers a better understanding of semantics and a more accurate comparison between different proposals. In this section, we will formally show that the ranking $\succeq_\alpha$ derived from the counting semantics satisfies some of these postulates.

The first axiom is that a ranking on a set of arguments does not depend on their identity but merely on the attack relations among them. In other words, if two argumentation system are isomorphic, then they are equivalent and should have the same ranking semantics. The isomorphisms between two argumentation frameworks $\textit{AF}_1=\left< \mathcal{X}_1, \mathcal{R}_1\right>$ and $\textit{AF}_2=\left< \mathcal{X}_2, \mathcal{R}_2\right>$ is a bijective function $\tau$: $\mathcal{X}_1 \mapsto \mathcal{X}_2$ such that for all $x,y\in \mathcal{X}_1$, $x\mathcal{R}_1 y$ if and only if $\tau(x)\mathcal{R}_2\, \tau(y)$. Now we define the first axiom as follows:
\begin{axiom}[Abstraction {\sf (Ab)}]
A ranking-based semantics $\mathrm{\Gamma}$ satisfies \emph{abstraction} iff for any two argumentation frameworks $\textit{AF}_1=\left< \mathcal{X}_1, \mathcal{R}_1\right>$ and $\textit{AF}_2=\left< \mathcal{X}_2, \mathcal{R}_2\right>$, for every isomorphism $\tau$ from $\textit{AF}_1$ to $\textit{AF}_2$, we have that $\forall x,y\in \mathcal{X}_1$, $x \succeq^{\textit{AF}_1}_{\mathrm{\Gamma}} y$ iff $\tau(x) \succeq^{\textit{AF}_2}_{\mathrm{\Gamma}} \tau(y)$.
\end{axiom}

The second axiom states that the question whether argument $x$ is ranked higher than argument $y$ should be independent of any argument $z$ that is not connected to $x$ or $y$, i.e., there is no path from $x$ or $y$ to $z$ (neglecting the direction of the edges). Let $\mathcal{C}(\textit{AF})$ be the set of weak connected components of $\textit{AF}$. Each weak connected component of $\textit{AF}$ is a maximal subgraph of $\textit{AF}$ in which any two arguments are mutually connected by a path (neglecting the direction of the edges).
\begin{axiom}[Independence {\sf (In)}]
A ranked-based semantics $\mathrm{\Gamma}$ satisfies \emph{independence} iff for any argumentation framework $\textit{AF}=\left< \mathcal{X}, \mathcal{R}\right>$, and for any $\textit{AF}_c=\left< \mathcal{X}_c, \mathcal{R}_c\right>$ such that $\textit{AF}_c \in 2^{\mathcal{C}(\textit{AF})}$, $\forall x,y\in\mathcal{X}_c$, $x \succeq^{\textit{AF}}_{\mathrm{\Gamma}} y$ iff $x \succeq^{\textit{AF}_c}_{\mathrm{\Gamma}} y$.
\end{axiom}

The third axiom encodes the idea that non-attacked arguments are more acceptable (and thus ranked higher) than attacked ones.
\begin{axiom}[Void Precedence {\sf (VP)}]
A ranked-based semantics $\mathrm{\Gamma}$ satisfies \emph{void precedence} iff for any $\textit{AF}=\left< \mathcal{X}, \mathcal{R}\right>$, $\forall x,y\in\mathcal{X}$, if $\mathcal{R}^-(x)=\emptyset$ and $\mathcal{R}^-(y)\neq\emptyset$ then $x \succ^{\textit{AF}}_{\mathrm{\Gamma}} y$.
\end{axiom}

The fourth postulate states that having attacked attackers is more acceptable than non-attacked attackers, i.e., being defended is ranked higher than not being defended. For any $x\in\mathcal{X}$, we denote by $\mathcal{D}(x)=\{y\in\mathcal{X}| \exists z\in\mathcal{X} \hbox{~such that~} z\mathcal{R}x \hbox{~and~} y\mathcal{R}z\}$ the set of all defenders of argument $x$ in $\left< \mathcal{X}, \mathcal{R}\right>$. Alternatively, we can also write $\mathcal{D}(x)=\{y\in\mathcal{X}|y \in \mathcal{R}^-\big(\mathcal{R}^-(x)\big)\}$.
\begin{axiom}[Defense precedence {\sf (DP)}]
A ranked-based semantics $\mathrm{\Gamma}$ satisfies \emph{defense precedence} iff for every $\textit{AF}=\left< \mathcal{X}, \mathcal{R}\right>$, $\forall x,y\in\mathcal{X}$, if $|\mathcal{R}^-(x)|=|\mathcal{R}^-(y)|$, $\mathcal{D}(x)\neq \emptyset$, and $\mathcal{D}(y)=\emptyset$ then $x \succ^{\textit{AF}}_{\mathrm{\Gamma}} y$.
\end{axiom}

The next axiom says that an argument $x$ should be at least as acceptable as argument $y$, when the direct attackers of $y$ are at least as numerous and well-ranked as those of $x$. This involves a relation that compares sets of arguments, i.e., \emph{group comparison}: Let $\succeq$ be a ranking on a set of arguments $\mathcal{X}$. For any argument subset $S_1,S_2\subseteq \mathcal{X}$, $S_1 \succeq S_2$ iff there exists an injective mapping $\delta$ from $S_2$ to $S_1$ such that $\forall x\in S_2$, $\delta(x)\succeq x$. Obviously, if $S_1 \succeq S_2$, then there must be $|S_1|\geq |S_2|$ and for arbitrary $y \in S_2$, $\exists x \in S_1$ such that $x \succeq y$.
\begin{axiom}[Counter-Transitivity {\sf (CT)}]
A ranked-based semantics $\mathrm{\Gamma}$ satisfies \emph{counter-transitivity} iff for every $\textit{AF}=\left< \mathcal{X}, \mathcal{R}\right>$, $\forall x,y\in\mathcal{X}$, if $\mathcal{R}^-(y) \succeq^{\textit{AF}}_{\mathrm{\Gamma}} \mathcal{R}^-(x)$ then $x \succeq^{\textit{AF}}_{\mathrm{\Gamma}} y$.
\end{axiom}

Moreover, $S_1 \succ S_2$ is a \textit{strict group comparison} iff it satisfies two conditions: (1) $S_1 \succeq S_2$; (2) $|S_1|>|S_2|$ or $\exists x \in S_2$, $\delta(x)\succ x$.
\begin{axiom}[Strict Counter-Transitivity {\sf (SCT)}]
A ranked-based semantics $\mathrm{\Gamma}$ satisfies \emph{strict counter-transitivity} iff for any $\textit{AF}=\left< \mathcal{X}, \mathcal{R}\right>$, $\forall x,y\in\mathcal{X}$, if $\mathcal{R}^-(y) \succ^{\textit{AF}}_{\mathrm{\Gamma}} \mathcal{R}^-(x)$ then $x \succ^{\textit{AF}}_{\mathrm{\Gamma}} y$.
\end{axiom}

The following two axioms provide two choices: giving precedence to cardinality over quality (i.e. two weakened attackers are worse for the target than one strong attacker), or vice versa. In some situations, both options are rational.
\begin{axiom}[Cardinality Precedence {\sf (CP)}]
A ranked-based semantics $\mathrm{\Gamma}$ satisfies \emph{cardinality precedence} iff for arbitrary argumentation framework $\textit{AF}=\left< \mathcal{X}, \mathcal{R}\right>$, $\forall x,y\in\mathcal{X}$, if $|\mathcal{R}^-(x)| < |\mathcal{R}^-(y)|$ then $x \succ^{\textit{AF}}_{\mathrm{\Gamma}} y$.
\end{axiom}
\begin{axiom}[Quality Precedence {\sf (QP)}]
A ranked-based semantics $\mathrm{\Gamma}$ satisfies \emph{quality precedence} iff for arbitrary argumentation framework $\textit{AF}=\left< \mathcal{X}, \mathcal{R}\right>$, $\forall x,y\in\mathcal{X}$, if $\exists y' \in \mathcal{R}^-(y)$ such that $\forall x'\in\mathcal{R}^-(x)$, $y' \succ^{\textit{AF}}_{\mathrm{\Gamma}} x'$, then $x \succ^{\textit{AF}}_{\mathrm{\Gamma}} y$.
\end{axiom}

The last axiom concerns the way arguments are defended. The consideration is that an argument which is defended against more attackers is ranked higher than an argument which is depended against less attacks. There are two concepts of defense: simple and distributed. The defense of an argument $x$ is \emph{simple} iff every defender of $x$ attacks exactly one attacker of $x$, formally, $\forall y \in \mathcal{D}(x)$ such that $|\mathcal{R}^+(y)\cap\mathcal{R}^-(x)|=1$. The defense of an argument $x$ is \emph{distributed} iff $\forall y \in \mathcal{R}^-(x)$ such that $|\mathcal{R}^-(y)|\geq 1$, i.e., every attacker of $x$ is attacked by at least one argument.
\begin{axiom}[Distributed-Defense Precedence {\sf (DDP)}]
A ranked-based semantics $\mathrm{\Gamma}$ satisfies \emph{distributed-defense precedence} iff for any $\textit{AF}=\left< \mathcal{X}, \mathcal{R}\right>$, $\forall x,y\in\mathcal{X}$ such that $|\mathcal{R}^-(x)|=|\mathcal{R}^-(y)|$ and $|\mathcal{D}(x)|=|\mathcal{D}(y)|$, if the defense of $x$ is simple and distributed and the defense of $y$ is simple but not distributed then $x \succ^{\textit{AF}}_{\mathrm{\Gamma}} y$.
\end{axiom}

In addition, \cite{ref-amgoud2013ranking} provides some relationships between these axioms: if a ranking-based semantics $\mathrm{\Gamma}$ satisfies {\sf (SCT)} then it satisfies {\sf (VP)}; if $\mathrm{\Gamma}$ satisfies both {\sf (CT)} and {\sf (SCT)}, then it satisfies {\sf (DP)}; $\mathrm{\Gamma}$ can not satisfy both {\sf (CP)} and {\sf (QP)}. Now, let us give the following proposition about which axioms the ArgRank satisfies or dose not.
\begin{proposition}
The ranking $\succeq_\alpha$ derived from the counting semantics satisfies {\sf (Ab)}, {\sf (CT)}, {\sf (SCT)}, {\sf (VP)} and {\sf (DP)}, and does not satisfy {\sf (In)}, {\sf (CP)}, {\sf (QP)} and {\sf (DDP)}.
\end{proposition}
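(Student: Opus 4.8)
The plan is to base every case on the fixed-point form of the counting semantics obtained from the local-valuation reading (the functions $g$ and $h$ in Section~\ref{Sec_Comparation_Gradual}), namely that for every $x\in\mathcal{X}$ the limit $\bm{v}_\alpha$ satisfies
\[
  \bm{v}_\alpha(x) = 1 - \frac{\alpha}{N}\sum_{y\in\mathcal{R}^-(x)}\bm{v}_\alpha(y),
\]
together with the two standing facts recalled in the excerpt: $1-\alpha \le \bm{v}_\alpha(x)\le 1$ for every $x$, and $\bm{v}_\alpha(x)=1$ exactly when $\mathcal{R}^-(x)=\emptyset$ (while $\bm{v}_\alpha(x)<1$ as soon as $x$ is attacked, since every attacker contributes a value $\ge 1-\alpha>0$). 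This single equation, in which each value is an affine, sign-definite function of the values of the direct attackers, is the engine for all the positive claims.

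For the positive part I would argue as follows. \textsf{(Ab)} is immediate: an isomorphism $\tau$ acts as a simultaneous permutation of the rows and columns of $\bm{A}$, under which both the normalization factor $N=\|\bm{A}\|_\infty$ and the fixed-point equation are equivariant, so $\bm{v}_\alpha(\tau(x))=\bm{v}_\alpha(x)$ and the ranking is preserved. \textsf{(VP)} follows because an unattacked $x$ has $\bm{v}_\alpha(x)=1$ whereas an attacked $y$ has a strictly positive attacker-sum, hence $\bm{v}_\alpha(y)<1$. For \textsf{(CT)} I would use the injection $\delta$ witnessing $\mathcal{R}^-(y)\succeq_\alpha\mathcal{R}^-(x)$ together with nonnegativity of the values: summing $\bm{v}_\alpha(\delta(z))\ge \bm{v}_\alpha(z)$ over $z\in\mathcal{R}^-(x)$ and discarding the (nonnegative) values of the attackers of $y$ outside the image of $\delta$ gives $\sum_{w\in\mathcal{R}^-(y)}\bm{v}_\alpha(w)\ge \sum_{z\in\mathcal{R}^-(x)}\bm{v}_\alpha(z)$, whence $\bm{v}_\alpha(x)\ge\bm{v}_\alpha(y)$. \textsf{(SCT)} is the same computation made strict: a strict group comparison forces either a surplus attacker of $y$ with value $\ge 1-\alpha>0$, or a termwise strict gain $\bm{v}_\alpha(\delta(z_0))>\bm{v}_\alpha(z_0)$, either of which makes the attacker-sum of $y$ strictly larger and so $\bm{v}_\alpha(x)>\bm{v}_\alpha(y)$. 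Finally \textsf{(DP)} can be read off directly (or deduced from \textsf{(CT)} and \textsf{(SCT)} as noted before the statement): with $|\mathcal{R}^-(x)|=|\mathcal{R}^-(y)|$, every attacker of the undefended $y$ is unattacked and contributes $1$, while the defended $x$ has at least one attacker that is itself attacked and so contributes $<1$; the remaining terms are $\le 1$, making $x$'s attacker-sum strictly smaller and $x\succ_\alpha y$.

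For the negative part I would exhibit one small framework per axiom. For \textsf{(CP)} and \textsf{(QP)} the point is that the value is a sum over direct attackers, so cardinality and quality trade off continuously: against an argument with a single unattacked (maximal-value) attacker I would place an argument with two (resp.\ several) heavily attacked, low-value attackers, and check that for $\alpha$ in the recommended range the many-weak-attacker argument ends up strictly higher, contradicting the strict precedence each axiom demands. For \textsf{(DDP)} the decisive observation is that the first-order attacker-sum is blind to how defense is spread: a target with two attackers each defended once, and a target with two attackers one of which is defended twice and the other not at all, produce \emph{equal} attacker-sums (each loses a total of $2\alpha/N$ worth of attacker strength), hence equal values and $x\simeq_\alpha y$ rather than $x\succ_\alpha y$. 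For \textsf{(In)} I would exploit the global normalization factor $N=\|\bm{A}\|_\infty$: within a fixed component the relative order of an argument with few attackers and a well-defended argument with more attackers depends on $\beta=\alpha/N$ (the few-attacker argument wins as $\beta\to0$, the well-defended one for larger $\beta$), so attaching a disconnected argument of large in-degree raises $N$, lowers $\beta$, and flips the order, breaking independence.

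The main obstacle I anticipate is the negative part, since there one must pin down the normalization factor $N$ in each constructed framework (it is the maximum number of attackers of any single argument, and the auxiliary gadgets must not accidentally raise it) and then tune the number of auxiliary arguments and the admissible range of $\alpha$ so that the inequalities land on the correct side. \textsf{(DDP)} is the subtlest: its failure rests on an exact cancellation in the attacker-sum, so I would verify that the two configurations are built to share the same $|\mathcal{R}^-|$, the same $|\mathcal{D}|$, and the prescribed simple/distributed structure while still forcing numerical equality; and \textsf{(In)} requires checking that the required order flip genuinely occurs for some $\alpha\in(0,1)$ rather than only in a degenerate limit.
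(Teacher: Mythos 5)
Your proposal is correct and takes essentially the same route as the paper: the positive axioms {\sf (Ab)}, {\sf (CT)}, {\sf (SCT)}, {\sf (VP)}, {\sf (DP)} are read off the fixed-point form of Eqn.~\eqref{Eqn_Iteration} (equivalently the $g,h$ instantiation of the local valuation), {\sf (In)} fails because the normalization factor $N=\|\bm{A}\|_\infty$ changes when a disconnected component is added, and your {\sf (DDP)} configuration is exactly the paper's counterexample of Fig.~\ref{Fig_CounterExample}, with the same cancellation $\bm{v}_\alpha(x_1)=\bm{v}_\alpha(y_1)=1-\alpha+\frac{\alpha^2}{2}$. If anything, you are more thorough than the paper, which argues {\sf (SCT)} only via the special case $\mathcal{R}^-(x)\subset\mathcal{R}^-(y)$ and offers no explicit constructions for {\sf (CP)}, {\sf (QP)}, or {\sf (In)}, whereas your general injection argument and tuned gadgets (e.g.\ attackers of value $1-\alpha$ overpowering a single unattacked attacker once $\alpha$ is large enough, with the caveat you correctly flag that the number of auxiliary attackers and the admissible range of $\alpha$ must be balanced against $N$) actually discharge those claims.
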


The {\sf (Ab)} can be proved by the definition of the counting semantics. By Eqn.~\eqref{Eqn_Iteration}, it follows {\sf (CT)}. In particular, when $\mathcal{R}^-(x_i)\subset\mathcal{R}^-(x_j)$ then the semantics gives $x_i\succ x_j$, which is the case of {\sf (SCT)}. The {\sf (VP)} and {\sf (DP)} can be followed from their relationships with {\sf (CT)} and {\sf (SCT)}. The ranking $\succeq_\alpha$ does not always satisfy {\sf (In)} since an argumentation framework and its weak connected components may have different normalization factor, which may bring about different rankings on arguments.

The ranking $\succeq_\alpha$ does not satisfy {\sf (CP)} since it mainly concerns on the number of the $1$-length attackers. Clearly, the ranking $\succeq_\alpha$ also does not satisfy {\sf (QP)}.

Moreover, {\sf (DDP)} is not satisfied by the ranking $\succeq_\alpha$. As a counter example, for instance, consider the argumentation framework in Fig.~\ref{Fig_CounterExample}, in which the two arguments $x_1$ and $y_1$ have the same number of attackers and defenders: $\mathcal{R}^-(x_1)=\{x_2,x_3\}$ and $\mathcal{R}^-(y_1)=\{y_2,y_3\}$, and $\mathcal{D}(x_2)=\{x_4,x_5\}$ and $\mathcal{D}(y_2)=\{y_4,y_5\}$. We can see that the defense of argument $x_1$ is simple and distributed while the defense of argument $y_1$ is simple but not distributed. Therefore, the axiom {\sf (DDP)} guarantees that $x_1$ is ranked higher than $y_1$. On the other hand, the ranking $\succeq_\alpha$ provides that $\bm{v}_\alpha(x_1)=\bm{v}_\alpha(y_1)=1-\alpha+\frac{\alpha^2}{2}$. This contradicts with the result of {\sf (DDP)}. Assume there is another argument $x_6$ which attacks argument $x_4$, then $x_4$ is weakened, thereby increasing (the rank value of) $x_2$ and decreasing $x_1$, and thereby following $y_1 \succ x_1$. Note that then the defense of $x_1$ still is simple and distributed, therefore, this also confirms that the ranking $\succeq_\alpha$ does satisfy {\sf (DDP)}. The major reason of this counter situation is that the axiom {\sf (DDP)} concentrates too much on quite local topological respects of an argumentation framework, but ignores the global topology. However, the counting semantics is a global approach since the rank value of an argument relies on the rank values of its attackers, which is a recursive definition.

\begin{figure}[htb]
\centering
\subfloat{
\begin{tikzpicture}[->,>=stealth',shorten >=1pt,auto,node distance=1.1cm, semithick]

\tikzstyle{vBlue}=[draw=blue!60,fill=blue!0,circle,text width=5.5mm,inner sep=1pt,minimum height=6pt, align=center]
\tikzstyle{vRed}=[draw=blue!50,fill=black!5,circle,text width=5.5mm,inner sep=1pt,minimum height=6pt, align=center]
\tikzstyle{every edge}=[draw=black!60]

\node[vRed](x1){$x_1$};
\node[vBlue, above left of=x1, yshift=-2mm, xshift=-1.5mm](x2){$x_2$};
\node[vBlue, below left of=x1, yshift=2mm, xshift=-1.5mm](x3){$x_3$};
\node[vBlue, left of=x2](x4){$x_4$};
\node[vBlue, left of=x3](x5){$x_5$};

\path 
      (x2) edge    (x1)
      (x3) edge    (x1)
      (x4) edge    (x2)
      (x5) edge    (x3);
\end{tikzpicture}
\label{Fig_CounterExample_First}}
\hfil
\subfloat{
\begin{tikzpicture}[->,>=stealth',shorten >=1pt,auto,node distance=1.1cm, semithick]

\tikzstyle{vBlue}=[draw=blue!60,fill=blue!0,circle,text width=5.5mm,inner sep=1pt,minimum height=6pt, align=center]
\tikzstyle{vRed}=[draw=blue!50,fill=black!5,circle,text width=5.5mm,inner sep=1pt,minimum height=6pt, align=center]
\tikzstyle{every edge}=[draw=black!60]

\node[vRed](y1){$y_1$};
\node[vBlue, above left of=y1, yshift=-2mm, xshift=-1.5mm](y2){$y_2$};
\node[vBlue, below left of=y1, yshift=2mm, xshift=-1.5mm](y3){$y_3$};
\node[vBlue, left of=y2](y4){$y_4$};
\node[vBlue, left of=y3](y5){$y_5$};

\path (y2) edge    (y1)
      (y3) edge    (y1)
      (y4) edge    (y2)
      (y5) edge    (y2);
\end{tikzpicture}
\label{Fig_CounterExample_Second}}
\caption{A counter-example of axiom {\sf (DDP)}}
\label{Fig_CounterExample}
\end{figure}
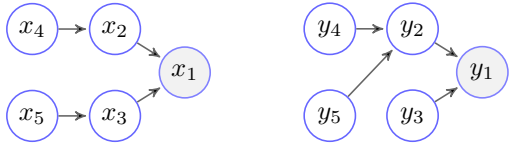
In \cite{ref-pfa2014caterank}, the authors have shown that the categoriser-based ranking semantics satisfies {\sf (Ab)}, {\sf (In)}, {\sf (VP)}, {\sf (DP)}, {\sf (CT)} and {\sf (SCT)}, and does not satisfy {\sf (CP)}, {\sf (QP)} and {\sf (DDP)}. Now, we can find a difference between the categoriser function and the counting semantics, i.e., the ranking $\succeq_\alpha$ does not satisfy {\sf (In)}. This is because the counting semantics adopts a dynamic normalization factor.

\section{Conclusion} \label{Sec_Conclusion}
In this paper, we carry on our previous work on a novel semantical notion for abstract argumentation frameworks: counting semantics. Instead of extensions, the counting semantics assigns strength values (representing degrees of acceptability) to arguments by counting the number of attackers and defenders for arguments. We discuss how to select the damping factor for the counting semantics. We show several interesting relationships between the counting semantics and Dung's classical semantics by means of boolean algebra. We relate the counting semantics with two proposals of generic gradual valuations. Finally, we present an axiomatic analysis of the ranking $\succeq_\alpha$ derived from our counting semantics.


The counting semantics generalizes Dung's extension-based semantics for abstract argumentation and allows for a more fine-grained differentiation of the status of arguments. It is an objective evaluation model (merely based on the graph structure of the argumentation framework). For future work, we intend to extend our semantics to some sophisticated problems, such as the large-scale online debates \cite{ref-leite2011SocialAF}, in which thousands of people participate in and thousands of arguments are expressed, and the bipolar argumentation frameworks \cite{ref-gradual-bipolar}, where two kinds of interaction between arguments: a positive interaction (an argument support another argument) and a negative interaction (an argument can attack another argument) are considered.

\section*{Acknowledgments}
The research reported here was supported by the National Natural Science Foundation of China under contract number NSFC61171121 and NSFC61572279.

\bibliographystyle{IEEEtran}
\bibliography{IEEEabrv,sigproc}

\begin{thebibliography}{10}
\providecommand{\url}[1]{#1}
\csname url@samestyle\endcsname
\providecommand{\newblock}{\relax}
\providecommand{\bibinfo}[2]{#2}
\providecommand{\BIBentrySTDinterwordspacing}{\spaceskip=0pt\relax}
\providecommand{\BIBentryALTinterwordstretchfactor}{4}
\providecommand{\BIBentryALTinterwordspacing}{\spaceskip=\fontdimen2\font plus
\BIBentryALTinterwordstretchfactor\fontdimen3\font minus
  \fontdimen4\font\relax}
\providecommand{\BIBforeignlanguage}[2]{{%
\expandafter\ifx\csname l@#1\endcsname\relax
\typeout{** WARNING: IEEEtran.bst: No hyphenation pattern has been}%
\typeout{** loaded for the language `#1'. Using the pattern for}%
\typeout{** the default language instead.}%
\else
\language=\csname l@#1\endcsname
\fi
#2}}
\providecommand{\BIBdecl}{\relax}
\BIBdecl

\bibitem{ref-pu2015counting}
F.~Pu, J.~Luo, Y.~Zhang, and G.~Luo, ``Attacker and defender counting approach
  for abstract argumentation,'' in \emph{Proceedings of the 37th Annual Meeting
  of the Cognitive Science Society. Austin, TX: Cognitive Science Society},
  2015, pp. 1913--1918.

\bibitem{ref-David00argknowledge}
D.~C. David, D.~Robertson, and J.~Lee, ``Argument-based applications to
  knowledge engineering,'' \emph{The Knowledge Engineering Review}, vol.~15, p.
  2000, 2000.

\bibitem{ref-Dung1995AAF}
P.~M. Dung, ``On the acceptability of arguments and its fundamental role in
  nonmonotonic reasoning, logic programming and n-person games,'' \emph{Journal
  of Artificial Intelligence}, vol.~77, no.~2, pp. 321--357, Sep. 1995.

\bibitem{ref-caminada2006issue}
M.~Caminada, ``On the issue of reinstatement in argumentation,'' in
  \emph{Logics in artificial intelligence}.\hskip 1em plus 0.5em minus
  0.4em\relax Springer, 2006, pp. 111--123.

\bibitem{Simari2009argame}
S.~Modgil and M.~Caminada, ``\BIBforeignlanguage{English}{Proof theories and
  algorithms for abstract argumentation frameworks},'' in
  \emph{\BIBforeignlanguage{English}{Argumentation in Artificial
  Intelligence}}, G.~Simari and I.~Rahwan, Eds.\hskip 1em plus 0.5em minus
  0.4em\relax Springer US, 2009, pp. 105--129.

\bibitem{ref-caminada2009argame}
M.~Caminada and Y.~Wu, ``An argument game for stable semantics,'' \emph{Logic
  Journal of IGPL}, p. jzn029, 2009.

\bibitem{ref-west2001introduction}
D.~B. West \emph{et~al.}, \emph{Introduction to graph theory}.\hskip 1em plus
  0.5em minus 0.4em\relax Prentice hall Englewood Cliffs, 2001, vol.~2.

\bibitem{ref-rienstra2013opponent}
T.~Rienstra, M.~Thimm, and N.~Oren, ``Opponent models with uncertainty for
  strategic argumentation,'' in \emph{Proceedings of the Twenty-Third
  international joint conference on Artificial Intelligence}.\hskip 1em plus
  0.5em minus 0.4em\relax AAAI Press, 2013, pp. 332--338.

\bibitem{ref-horn2012matrix}
R.~A. Horn and C.~R. Johnson, \emph{Matrix analysis}.\hskip 1em plus 0.5em
  minus 0.4em\relax Cambridge university press, 2012.

\bibitem{ref-AC98}
L.~Amgoud and C.~Cayrol, ``On the acceptability of arguments in
  preference-based argumentation,'' in \emph{Proceedings of the Fourteenth
  conference on Uncertainty in artificial intelligence}.\hskip 1em plus 0.5em
  minus 0.4em\relax Morgan Kaufmann Publishers Inc., 1998, pp. 1--7.

\bibitem{ref-cayrol2005graduality}
C.~Cayrol and M.-C. Lagasquie-Schiex, ``Graduality in argumentation,'' \emph{J.
  Artif. Intell. Res.(JAIR)}, vol.~23, pp. 245--297, 2005.

\bibitem{ref-jakobovits1999robust}
H.~Jakobovits and D.~Vermeir, ``Robust semantics for argumentation
  frameworks,'' \emph{Journal of logic and computation}, vol.~9, no.~2, pp.
  215--261, 1999.

\bibitem{ref-besnard2001logic}
P.~Besnard and A.~Hunter, ``A logic-based theory of deductive arguments,''
  \emph{Artificial Intelligence}, vol. 128, no.~1, pp. 203--235, 2001.

\bibitem{ref-amgoud2013ranking}
L.~Amgoud and J.~Ben-Naim, ``Ranking-based semantics for argumentation
  frameworks,'' in \emph{Scalable Uncertainty Management}.\hskip 1em plus 0.5em
  minus 0.4em\relax Springer, 2013, pp. 134--147.

\bibitem{ref-pfa2014caterank}
F.~Pu, J.~Luo, Y.~Zhang, and G.~Luo, ``Argument ranking with categoriser
  function,'' in \emph{Knowledge Science, Engineering and Management}.\hskip
  1em plus 0.5em minus 0.4em\relax Springer, 2014, pp. 290--301.

\bibitem{ref-leite2011SocialAF}
J.~Leite and J.~Martins, ``Social abstract argumentation,'' in
  \emph{Proceedings of the Twenty-Second international joint conference on
  Artificial Intelligence-Volume Volume Three}.\hskip 1em plus 0.5em minus
  0.4em\relax AAAI Press, 2011, pp. 2287--2292.

\bibitem{ref-gradual-bipolar}
C.~Cayrol and M.-C. Lagasquie-Schiex, ``Gradual valuation for bipolar
  argumentation frameworks,'' in \emph{Symbolic and Quantitative Approaches to
  Reasoning with Uncertainty}.\hskip 1em plus 0.5em minus 0.4em\relax Springer,
  2005, pp. 366--377.

\end{thebibliography}

\end{document}